\newtheorem{definition}{\bf{Definition}}
\newtheorem{lemma}{\bf{Lemma}}
\newtheorem{theorem}{\bf{Theorem}}
\newtheorem{remark}{\bf{Remark}}
\newcommand{\bs}{\boldsymbol}
\newtheorem{problem}{\textbf{Problem}}
\newcolumntype{C}[1]{>{\centering\let\newline\\\arraybackslash}m{#1}}
\title{\LARGE \bf
Robust Trajectory Tracking Control for Underactuated \\ Autonomous Underwater Vehicles}
\author{Shahab Heshmati-alamdari,  Alexandros Nikou and Dimos V. Dimarogonas
\thanks{Division of Decision and Control Systems, School of Electrical Engineering and Computer Science, KTH Royal Institute of Technology, Stockholm, Sweden, E-mail: {\tt\small \{shaha, anikou, dimos\}@kth.se}. This work was supported by the H2020 ERC Grant BUCOPHSYS, the Swedish Foundation for Strategic Research (SSF), the Swedish Research Council (VR) and the Knut och Alice Wallenberg Foundation (KAW).  }%
}
\begin{document}
\maketitle \thispagestyle{empty} \pagestyle{empty}

\begin{abstract} 
	Motion control of underwater robotic vehicles is a demanding task with great challenges imposed by external disturbances, model uncertainties and constraints of the operating workspace. Thus, robust motion control is still an open issue for the underwater robotics community. In that sense, this paper addresses the tracking control problem
	of 3D trajectories for underactuated underwater robotic vehicles operating in a constrained workspace including obstacles. In particular, a robust Nonlinear Model Predictive Control (NMPC) scheme is presented for the case of underactuated Autonomous Underwater Vehicles (AUVs) (i.e., vehicles actuated only in surge, heave and yaw). The purpose of the controller is to steer the underactuated AUV to a desired trajectory with guaranteed input and state constraints inside a partially known and dynamic environment where the knowledge of the operating workspace is constantly updated on–line via the vehicle’s on–board sensors. In particular, by considering a ball which covers the volume of the system, obstacle avoidance with any of the detected obstacles is guaranteed, despite the model dynamic uncertainties and the presence of external disturbances representing ocean currents and waves. The proposed feedback control law consists of two parts: an online law which is the outcome of a Finite Horizon Optimal Control Problem (FHOCP) solved for the nominal dynamics; and a state feedback law which is tuned off-line and guarantees that the real trajectories remain bounded in a hyper-tube centered along the nominal trajectories for all times.  Finally, a simulation study verifies the performance and efficiency of the proposed approach. 
\end{abstract}


\section{Introduction}
During the last decades, considerable progress has been made in the field of unmanned marine vehicles, with a significant number of results in a variety of marine activities \cite{heshmati2018cooperative}.  Applications such as ocean forecasting, ecosystem monitoring, underwater inspection of oil/gas pipelines and risers are indicative examples of applications that require the underwater robots to work under various constraints (e.g., communication \cite{Heshmati_AUV2018} energy \cite{Heshmati-Alamdari20155492} and sensing limitations) and increased level of autonomy in order to educing efficiently the various underwater resources \cite{Fossen1} .

A typical marine control problem is trajectory tracking which aims to steer the Autonomous Underwater Vehicle on a reference trajectory\cite{Bechlioulis2017429}.  Classical approaches such as local linearization and input-output decoupling have been used in the past to design motion controllers for underwater vehicles \cite{Fossen2011}. Nevertheless, the aforementioned methods yielded poor closed-loop performance and the results were local, around only certain selected operating points.  Output feedback linearization \cite{moon2018decentralized,subudhi2013static} is an alternative approach  which however is not always possible. Moreover, based on a combined approach involving Lyapunov theory and backstepping, various model-based non-linear controllers have been proposed in the literature \cite{Aguiar20022105,Aguiar20031988,Aguiar20071092,Repoulias20071650,Refsnes2008930}.  More specifically, in \cite{Aguiar20071092},  the authors extend their previous works in \cite{Aguiar20022105,Aguiar20031988}, by proposing a nonlinear adaptive control law that drives an underwater vehicle along a sequence of desired way points.  In the aforementioned control law, employing the backstepping technique, a kinematic controller law is extended to the dynamical model.  In \cite{Repoulias20071650}, the authors proposed a tracking controller by employing the integral backstepping technique. However, the effect of the second hydrodynamic damping force was ignored. 

Dynamic model uncertainties of underwater robotic vehicles have been mainly compensated by employing adaptive control techniques \cite{maalouf2015l1,Antonelli2003221}. For instance, in \cite{Do20041967}, the authors proposed an adaptive control approach for an underwater robotic vehicle in presence of model uncertainties.  However, owing to the sensitivity of the aforementioned controllers on unknown parameters, their capability in a real time experiment remains questionable. Moreover, based on switching control strategies and backstepping techniques, a hybrid parameter adaptation law was presented in \cite{Lapierre200889,Lapierre200992}, where the effects of external disturbances and un--modeled dynamics were not considered.  In addition, for all of the aforementioned motion control strategies, it is not always feasible or straightforward to incorporate input (generalized body forces/torques or thrust) and state (3D obstacles, velocities) constraints into the vehicle's closed-loop motion \cite{Heshmati_ICRA2019}. In that sense, the motion control problem of underwater robots continues to pose considerable challenges to system designers, especially in view of the high-demanding missions envisioned by the marine industry (e.g., ship hull inspection, surveillance of oil platforms, cable installation and tracking, etc.).

On the other hand, the reference/desired trajectory for the underwater robot is usually the result of some path planning techniques \cite{Garau20095, Petres2007331}. The majority of planning techniques are based on off–line optimization schemes, which consider static or quasi–static operational environments. Their output is often a set of way-points or trajectories, which are optimal with respect to energy consumption, while satisfying certain environmental constraints (i.e., known obstacles). However, in real--time missions, the vehicle operates in a partially known and dynamic environment where the knowledge of the operating workspace is constantly updated on--line via the vehicle's on--board sensors (e.g., multi-beam imaging sonars, Onboard vision system, on-line ocean current estimators). In these cases, the underwater vehicle has to re-calculate its path on--line according to possible environmental changes (i.e., new obstacles, other vehicles or humans operating in proximity etc.). More details regarding the open problems on underwater robotics path planning, can be found in \cite{Zeng2015303} and \cite{subramani2016energy}. 

Motivated by the aforementioned considerations, this work presents a robust trajectory tracking control scheme for underactuated Autonomous Underwater Vehicles (AUVs) operating in a constrained workspace including obstacles. In particular, a robust Nonlinear Model Predictive Control (NMPC) scheme is presented for the  underactuated AUVs (i.e., vehicles actuated only in surge, heave and yaw). Various constraints such as: sparse obstacles, workspace boundaries, control input saturation are considered during the control design. The purpose of the controller is to steer the underactuated AUV on a desired trajectory inside a constrained and dynamic workspace. Since the knowledge of the operating workspace is constantly updated online via the vehicle’s on–board sensors, the robot re-calculates its path online if the updated environmental changes (i.e., new detected obstacles) are in conflict with the reference trajectory.  In particular, by considering a ball which covers the volume of the system, obstacle avoidance with any of the detected obstacles is guaranteed, despite the model dynamic uncertainties and the presence of external disturbances representing ocean currents and waves. The proposed feedback control law consists of two parts: an online law which is the outcome of a Finite Horizon Optimal Control Problem (FHOCP) solved for the nominal dynamics; and a state feedback law which is tuned off-line and guarantees that the real trajectories remain bounded in a hyper-tube centered along the nominal trajectories for all times. The volume of the hyper-tube depends on the upper bound of the disturbances as well as bounds of derivatives of the dynamics. The closed-loop system has analytically guaranteed stability and convergence properties. Finally, a simulation study verifies the performance and efficiency of the proposed approach. 


\section{Notation and Background} \label{sec:notation_preliminaries}
In this work, the vectors are denoted with lower bold letters whereas the matrices by capital bold letters.
Define by $\mathbb{N}$ and $\mathbb{R}$ the sets of positive integers and real numbers, respectively. Given a set $\mathcal{S}$, denote by $|\mathcal{S}|$ and $\mathcal{S}^n \coloneqq \mathcal{S} \times \dots \times \mathcal{S}$ its cardinality and its $n$-fold Cartesian product. Given a vector $\bs{z} \in \mathbb{R}^{n}$ define by: $$\|\bs{z}\|_{2} \coloneqq \sqrt{\bs{z}^\top \bs{z}}, \ \ \|\bs{z}\|_{\scriptscriptstyle \infty} \coloneqq \max_{i = 1, \dots, n}{|\bs{z}_i|},  \ \ \|\bs{z}\|_{P} \coloneqq \sqrt{\bs{z}^\top \bs{P} \bs{z}},$$ its Euclidean, infinite and $\bs{P}$-weighted norm, respectively, with $\bs{P} \ge 0$.  The notation  $\lambda_{\scriptscriptstyle \min}(\bs{P})$ stands for the minimum absolute value of the real part of the eigenvalues of $\bs{P} \in \mathbb{R}^{n \times n}$; $0_{m \times n} \in \mathbb{R}^{m \times n}$ and $I_n \in \mathbb{R}^{n \times n}$ stand for the $m \times n$ matrix with all entries zeros and the identity matrix, respectively. The notation ${\rm diag}\{P_1, \dots, P_n\}$ stands for the block diagonal matrix with the matrices $P_1$, $\dots, P_n$ in the main diagonal; $$\mathcal{B}(\bs{c}, r) \coloneqq \left\{\bs{x} \in \mathbb{R}^{n} : \|\bs{x}-\bs{c}\|_{2} \le r \right\},$$ stands for a ball in $\mathbb{R}^{n}$ with center and radius $\bs{c} \in \mathbb{R}^{n}$, $r > 0$, respectively. Given~sets~$\mathcal{S}_1$, $\mathcal{Z}$~$\subseteq \mathbb{R}^n$, $\mathcal{S}_2 \subseteq \mathbb{R}^{m}$~and~matrix $\bs{P} \in \mathbb{R}^{n \times m}$,~the \emph{Minkowski addition}, the~\emph{Pontryagin~difference} and the \emph{matrix-set multiplication} are respectively defined by:
\begin{align*}
\mathcal{S}_1 \oplus \mathcal{Z} & \coloneqq \{\bs{s}_1 + \bs{z} : \bs{s}_1 \in \mathcal{S}_1, \bs{z} \in \mathcal{Z}\}, \\
\mathcal{S}_1 \ominus \mathcal{Z} & \coloneqq \{\bs{s}_1 \in \mathcal{S}_{1} : \bs{s}_1+\bs{z} \in \mathcal{S}_1, \forall \bs{z} \in \mathcal{Z}\}, \\
P \circ \mathcal{S}_{2} & \coloneqq \{\bs{P}\bs{s}, \bs{s} \in \mathcal{S}_{2} \}.
\end{align*}
\begin{definition} \label{def:RPI_set} \cite{khalil_nonlinear_systems}
Consider a dynamical system: $$\dot{\bs{x}} = f(\bs{x},\bs{u},\bs{d}), \ \bs{x} \in \mathcal{X}, \ \bs{u} \in \mathcal{U}, \ \bs{d} \in \mathcal{D}, $$ with initial condition $\bs{x}(0) \in \mathcal{X}$ and external disturbances $\bs{d} \in \mathcal{D}$. A set $\mathcal{X}' \subseteq \mathcal{X}$ is a \emph{Robust Control Invariant (RCI) set} for the system, if there exists a feedback control law $\bs{u} \coloneqq \kappa(\bs{x}) \in \mathcal{U}$, such that for all $\bs{x}(0) \in \mathcal{X}'$ and for all $\bs{d} \in \mathcal{D}$ it holds that $\bs{x}(t) \in \mathcal{X}'$ for all $t \ge 0$, along every solution $\bs{x}(t)$ of the closed-loop system.
\end{definition}
\begin{figure}[t!]
	\centering
	\includegraphics[scale=0.26]{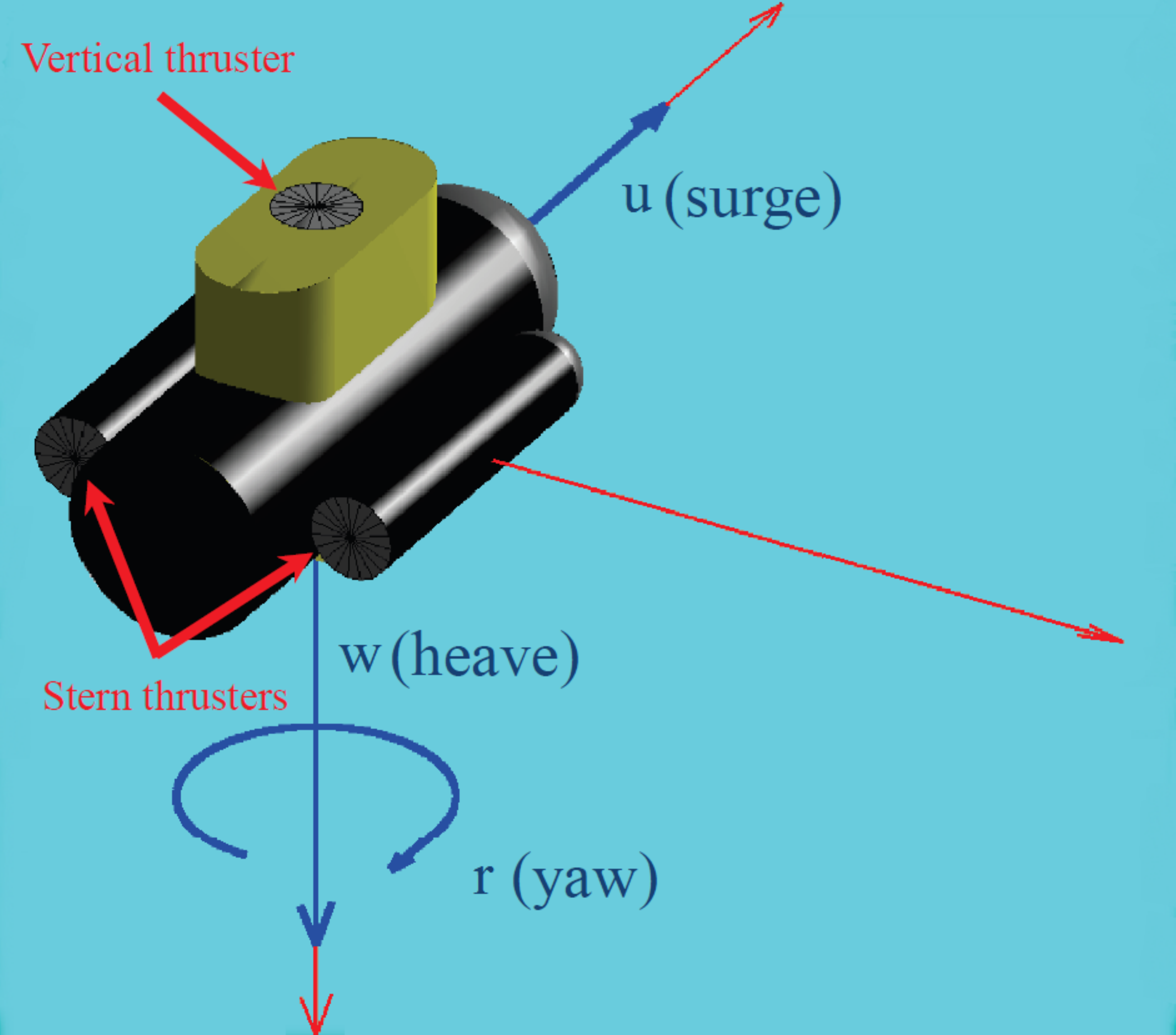}	
	\caption{The underactuated underwater vehicle. Blue color indicates the actuated degrees of freedom.}\label{nonholonomic}\vspace{-4mm}
\end{figure}
\section{Problem Statement} \label{Section:problem}
In this section, the overall problem is formulated. Initially, the nominal model of the under-actuated underwater vehicle and  the constraints that must be fulfilled are presented.
\subsection{Mathematical Modeling}
The prior step before analyzing the proposed methodology is the presentation of the preliminary aspects of the modeling of underwater vehicles.  The pose vector of the vehicle with respect to (w.r.t.) the inertial frame $\mathcal{I}$ is denoted by $\boldsymbol{\eta}=\left[\boldsymbol{\eta^{T}_1}~\boldsymbol{\eta^{T}_2} \right]^T\in\mathbb{R}^{6}$ including the position (i.e., $\boldsymbol{\eta_1}=\left[x~y~z\right]^T$) and orientation (i.e., $\boldsymbol{\eta_2}=\left[\phi~\theta~\psi\right]^T$)
vectors. The $\boldsymbol{v}=\left[\boldsymbol{v^{T}_1}~\boldsymbol{v^{T}_2} \right]^T\in\mathbb{R}^{6}$ is the velocity vector of the vehicle expressed in body fixed frame $\mathcal{V}$ and includes the linear (i.e., $\boldsymbol{v_1}=\left[u~v~w\right]^T$) and angular (i.e., $\boldsymbol{v_2}=\left[p~q~r\right]^T$) velocity vectors where the components have been named according to SNAME \cite{SNAME} as surge, sway, heave, roll, pitch and yaw respectively (Fig.\ref{nonholonomic}). Without loss of generality, according to the standard underwater vehicles' modeling properties \cite{Fossen2}, assuming that the vehicle is operating at relative low speeds, the dynamic equations of the vehicle can be given as:
\begin{subequations}\label{eq:AUV_model}
	\begin{gather}%
	{\boldsymbol{\dot{\eta}}}=\boldsymbol{J}\left( \boldsymbol{\eta}\right)  \boldsymbol{v}\\
	\boldsymbol{M}{\boldsymbol{\dot{v}}}\!+\! \boldsymbol{C}\left(  \boldsymbol{v}\right)  \boldsymbol{v}\!+\! \boldsymbol{D}(\bs{v}) \boldsymbol{v}\! +\! \boldsymbol{g}\left(  \boldsymbol{\eta}\right)
	=\boldsymbol{\tau }
	\end{gather}
\end{subequations}
where:
\begin{itemize}
	\item $\boldsymbol{\tau}=\left[X,~Y,~Z,~K,~M,~N\right]^T\in\mathbb{R}^{6}$ is the total propulsion force/torque vector (i.e., the body forces and torques generated by the thrusters) applied on the vehicle and expressed in body-fixed frame $\mathcal{V}$;
	
	\item $\boldsymbol{M}=\boldsymbol{M}_{RB}+\boldsymbol{M}_{A}$, where $\boldsymbol{M}_{RB}\in\mathbb{R}^{6\times6}$ and $\boldsymbol{M}_{A}\in\mathbb{R}^{6\times6}$  are the inertia matrix for the rigid body and added mass respectively;
	
	\item $\boldsymbol{C}\left(  \boldsymbol{v}\right) = \boldsymbol{C}_{RB}\left(  \boldsymbol{v}\right)  +\boldsymbol{C}_{A}\left(  \boldsymbol{v}\right)
	~$, where $\boldsymbol{C}_{RB}\left(  \boldsymbol{v}\right)\in\mathbb{R}^{6\times6}$ and $\boldsymbol{C}_{A}\left(  \boldsymbol{v}\right)\in\mathbb{R}^{6\times6}$ are the coriolis and centripetal matrix for the rigid body and added mass respectively;
	
	\item $\boldsymbol{D}\left(\boldsymbol{v}\right)  = \boldsymbol{D}_{quad}\left(  \boldsymbol{v}\right)  +\boldsymbol{D}_{lin}\left(  \boldsymbol{v}\right)
	~$, where $\boldsymbol{D}_{quad}\left(  \boldsymbol{v}\right)\in\mathbb{R}^{6\times6}$ and $\boldsymbol{D}_{lin}\left(  \boldsymbol{v}\right)\in\mathbb{R}^{6\times6}$ are the quadratic and linear drag matrix respectively;
	
	\item $\boldsymbol{g}\left(\boldsymbol{\eta}\right)\in\mathbb{R}^{6}$ is the hydrostatic restoring force vector;
	
	\item $\boldsymbol{J}\left(  \boldsymbol{\eta}\right)=\left[\begin{array}{cc}
	\boldsymbol{J}_1\left(\boldsymbol{\eta_2}\right) & \bs{0}_{3\times 3}\\
	\bs{0}_{3\times 3} & \boldsymbol{J}_2\left(\boldsymbol{\eta_2}\right)
	\end{array}\right]$ is the Jacobian matrix transforming the
	velocities from the body-fixed ($\mathcal{V}$) to the inertial ($\mathcal{I}$) frame, in which $\boldsymbol{J}_1\left(\boldsymbol{\eta_2}\right)\in SO(3)$ is the well known rotation matrix and $\boldsymbol{J}_2\left(\boldsymbol{\eta_2}\right)\in\mathbb{R}^{3\times3}$ denotes the lumped transformation matrix \cite{Fossen2};
\end{itemize}

We assume that the vehicle is equipped with three thrusters, which are effective only in surge, heave and yaw motion (Fig.\ref{nonholonomic}), meaning that the vehicle is under-actuated along the sway axis. Moreover, we assume that the vehicle is designed with meta-centric restoring forces in order to regulate roll and pitch angles. Thus, the angles \(\phi\), \(\theta\) and angular velocities \(p\) and \(q\) are negligible and we can consider them to be equal to zero. Thus, from now on, we denote by $\bs{x}=[x,y,z,\psi]^\top$ and $\bs{v}=[u,w,r]^\top$ the state and the control input of the system respectively. The vehicle is symmetric around the \(x\) - \(z\) plane and close to symmetric around the \(y\) - \(z\) plane. Therefore, we can safely assume that motions in heave, roll and pitch are decoupled \cite{Fossen2,Heshmati-Alamdari20143826}. 
{{Because the vehicle is operating at relatively low speeds, the coupling effects are considered to be negligible. Due to the above assumptions and the relative low speed of the vehicle, we consider the vehicle's motion equations, which given as follows:
\begin{align}
& \dot{\bs{x}}=f(\bs{x})\bs{v}+ g(\bs{x},v) \label{eq2} \end{align}
where $\dot{\bs{x}}=[\dot{x},~\dot{y},~\dot{z},~\dot{\psi}]^\top$ and:
\begin{align*}
&f(\bs{x})\bs{v} =
\begin{bmatrix}\cos(\psi) & 0 & 0\\ \sin(\psi)& 0 &0 \\0 & 1& 0\\0&0&1\end{bmatrix}
\begin{bmatrix}u\\ w\\r\end{bmatrix},~g(\bs{x},v)=
\begin{bmatrix}-\sin(\psi) \\ \cos(\psi)\\0\\0
\end{bmatrix}v
\end{align*}
Notice that the $v$ indicates the vehicle velocity in the sway direction.}} In \cite{mikacdc11}, using Input-to-State Stability (ISS) framework, it was shown that for any vehicle described
by \eqref{eq2} and for any bounded control input $[u , r]$ the
velocity around the sway direction $v$ can be seen as a bounded
perturbation with upper bound  $||v||\leq \bar{v}$. Consequently, this point is an equilibrium of
the kinematic system of \eqref{eq2}. Note that throughout this
paper the notation $(\ \bar{\cdot}\ )$ will denote the upper bound
for each of the variables. Therefore we consider the system:
\begin{align}
\dot{\bs{x}}=f(\bs{x})\bs{v} \label{eq3} \end{align}
as the nominal kinematic system of the underwater vehicle, while the
function $g(\cdot)$ is considered as a bounded inner disturbance
of the system that vanishes at the origin. Also, $g(\bs{x},v)\in
\Gamma \subset \mathbb{R}^4$ with $\Gamma$ being a compact set,
such that:
\begin{align}
||g(\bs{x},v)||\leq \bar{\gamma} \quad \text{with} \quad \bar{\gamma}\triangleq\bar{v}
\end{align}
\begin{figure}[t!]
	\centering
	\includegraphics[scale=0.26]{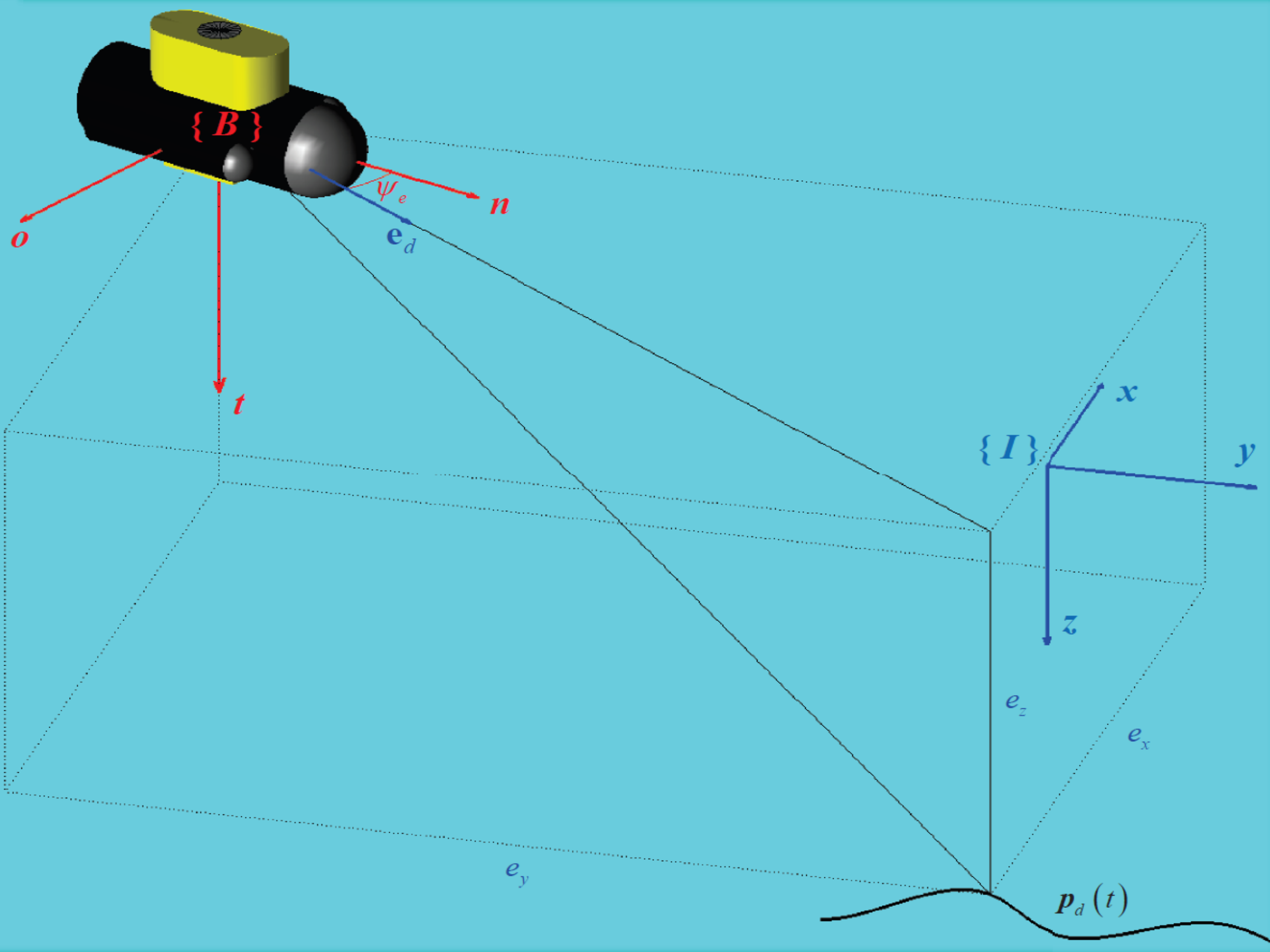}	
	\caption{Graphical illustration of the error and the sea current definitions.}\label{fig1}\vspace{-4mm}
\end{figure}
The robot moves under the influence of an irrotational current which behaves as an external disturbance. The current has
components with respect to the $x$, $y$ and $z$ axes, denoted by $\delta_x$, $\delta_y$ and $\delta_z$, respectively. Also it
is assumed to have a slowly-varying velocity $\delta_c$ which is upper bounded by $||\delta_{c}||\leq \bar{\delta}_c$ and it has direction $\beta$
in the $x-y$ plane and $\alpha$ with respect to the $z$-axis of the global frame, see Fig. \ref{fig1}. In particular we set
$\delta=[\delta_{x},\delta_{y},\delta_{z},0]^\top
\in \Delta\subset\mathbb{R}^4$ with $\Delta$ being a compact set,
where:
\begin{align}
&\delta_{x}\triangleq \delta_{c} \cos(\beta) \sin(\alpha)  \nonumber\\
&\delta_{y}\triangleq \delta_{c}  \sin(\beta)  \sin(\alpha) \nonumber\\
&\delta_{z}\triangleq \delta_{c} \cos(\alpha)   \label{eq10}
\end{align}
It is easy to show that there exists a $\bar{\delta} >0$ such that  $||\delta||\leq \bar{\delta}$. Taking into consideration
the aforementioned disturbances that affect the vehicle, we can model the perturbed system as follows:\vspace{-1mm}
\begin{align}
\dot{\bs{x}}=f(\bs{x})\bs{v}+\bs{\omega}\label{11}
\end{align}
with $\bs{\omega}=[\omega_1,~\omega_2,~\omega_3,~0]^\top \coloneqq g(\bs{x},v)+\delta \in \Omega \subset \mathbb{R}^4
$ as the result of adding the inner and external disturbances of
the system and $\Omega$ is a compact set with
$\Omega=\Delta\oplus\Gamma$. Since the sets
$\Delta$ and $\Gamma$ are compact, we have that $\Omega$ is also
a compact set, and thus we have:
\begin{align}
\Omega=\{ \bs{\omega}(t)\in \mathbb{R}^4 : || \bs{\omega}(t) ||_2  \leq \bar{\omega} \}\label{Wset}
\end{align}
with
$\bar{\omega}\triangleq\bar{\delta}+\bar{\gamma}$ .

\subsection{Geometry of the Workspace}\label{Geometry}
We consider that the underwater vehicle operates inside a workspace $\mathcal{W}\subset \mathbb{R}^3$ with boundary  $\partial\mathcal{W}$ and scattered obstacles located within it. Without loss of the generality, the robot and the obstacles are modeled by spheres (i.e., we
adopt the spherical world representation \cite{Koditschek1990412}). 
Let $\mathcal{B}(\boldsymbol{\eta}_1, \bar{r})$  be a closed ball that covers the whole vehicle volume (main body and additional equipments). Moreover, let $\mathcal{B}(\boldsymbol{\eta}_1, \bar{R})$ with $\bar{R}> \bar{r} $ be a sensing area where the robot can perceive and update its knowledge of the workspace (i.e., the obstacle locations) using its on-board sensors. Furthermore, the $\mathcal{M}$ static obstacles within the workspace are defined as closed balls described by $\pi_m=\mathcal{B}(\bs{p}_{\pi_m},r_{\pi_m}),~ m\in\{1,\ldots,\mathcal{M}\}$, where $\bs{p}_{\pi_m}\in \mathbb{R}^3$ is the center and $r_{\pi_m}>0$ the radius of the obstacle $\pi_m$. Additionally, based on the property of spherical world \cite{Koditschek1990412}, for each pair of obstacles $m,m'$ the following inequality holds:\vspace{0mm}
\begin{align}
|| \pi_{m}-\pi_{m'}|| > 2\bar{r} + r_{\pi_m}+r_{\pi_m'}
\label{obstacle_dis}
\end{align} 
which intuitively means that the obstacles $m$ and $m'$ are disjoint in such a way that the entire volume of the vehicle can pass through the free space between them. Therefore, there exists a feasible trajectory $\bs{\eta}(t)$ for the vehicle such as: \vspace{-3mm}
\begin{align}
\mathcal{B}&(\boldsymbol{\eta}_1(t), \bar{r})\!\cap\! \{ \mathcal{B}(\bs{p}_{\pi_m},r_{\pi_m}\!) \!\cup\! \partial \mathcal{W} \}\! =\! \emptyset,\nonumber\\
&\hspace{35mm} \!\forall t \geq 0, \forall m\! \in\!\{1,\!\ldots\!,\mathcal{M}\!\}\!\label{obstacle_eq}
\end{align}
\subsection{Constraints}
\subsubsection{State Constraints}
as already stated, the robot should be able to avoid the newly detected obstacles which may had been unknown to the off-line trajectory planner. This requirement is captured by the state constraint set $X$ of the system, given by: \vspace{-0mm}
\begin{equation}
\bs{x}(t)\in {X} \subset \mathbb{R}^4 \label{eq4}
\end{equation}\vspace{-1mm}
which in view of \eqref{obstacle_eq} can be defined as:
\begin{align}
X&\coloneqq \Big\{ \bs{x} \in \mathbb{R}^4:\mathcal{B}(\boldsymbol{\eta}_1, \bar{r})\!\cap\! \{ \mathcal{B}(\bs{p}_{\pi_m},r_{\pi_m}) \!\cup\! \partial \mathcal{W} \}\! =\! \emptyset,\nonumber\\
&\hspace{40mm}  \forall m\! \in\!\{1,\!\ldots\!,\mathcal{M}\!\}\! \Big\}\label{feasible set}
\end{align}
{{\subsubsection{Input Constraints}
for the needs of several common underwater tasks (e.g., seabed inspection, mosaicking), the vehicle is required to move with relatively low speeds with upper bound denoted by the velocity vector $\bar{\bs{v}} = [\bar{u}\;\bar{w}\;\bar{r}]^\top$ which defines the  control constraint set ${V}$ as follows:
\begin{equation}
\bs{v}(t) \in V \subseteq \mathbb{R}^3
\label{eq6}
\end{equation}
These constraints are of the form $|{u}(t)|\leq \bar{u}$ , $|{w}(t)|\leq \bar{w}$,
$|r(t)|\leq \bar{r}$, that is:
\begin{align}
V\coloneqq \Big\{ \bs{v} \in \mathbb{R}^3 : \| \bs{v} \|_\infty \leq  \bar{V}   \Big\}\label{Vbar}
\end{align}
with  $\bar{V}=(\bar{u}^2+\bar{w}^2+\bar{r}^2)^{\frac{1}{2}}$. }}
\subsection{Control Objective}\label{Sec:problem}
Let $\bs{p}_d(t) = [{x}_d (t) , {y}_d (t) , {z}_d (t)]^T$ denote a smooth desired trajectory with bounded time derivatives. The controller's objective is to guide the underactuated underwater vehicle to track the aforementioned desired trajectory $\bs{p}_d(t)$ while guarantying various state and input limitations remain satisfied, despite the presence of exogenous disturbances representing ocean currents and waves. Moreover, as stated previously, the underwater robots in most cases are operating in a partially known and dynamic environment where the knowledge of the workspace (e.g., obstacles positions) is updated as the robot moves through the workspace. Since the desired trajectory for the underwater vehicle is usually derived from an offline path planning technique, it may not be particularly feasible in terms of possible environmental changes. 
Thus, the controller must have the capability to be flexible with respect to changes on the knowledge of the operating workspace. Hence, the problem of this paper can be defined as follows:
\begin{problem}\label{prob1}(Robust Tracking Control for an Autonomous Underactuated Underwater Vehicle):
Consider an Underactuated Autonomous Underwater Vehicle described by \eqref{11} operating in a workspace $\mathcal{W} \subset \mathbb{R}^{3}$ with state, input constraints as well as disturbances imposed by the sets ${X}$, ${V}$ and $\Omega$ as in \eqref{eq4}, \eqref{eq6} and \eqref{Wset}, respectively. Consider also that the robot and the obstacles are all modeled according to the spherical world representation\footnote{as described in section-\ref{Geometry} }and the knowledge of the operating workspace $\mathcal{W}$ is constantly updated via the vehicle’s on–board sensors inside a sensing region defined by $\mathcal{B}(\boldsymbol{\eta}_1, \bar{R})$. Given a desired trajectory  $\bs{p}_d(t) = [{x}_d (t) , {y}_d (t) , {z}_d (t)]^T$, design a feedback control law $\bs{v} = \kappa(\bs{x})$ such that the desired trajectory $\bs{p}_d(t)$ is tracked with guaranteed input and state constraints while avoiding any collision with the obstacles, despite the presence of exogenous disturbances representing ocean currents and waves.
\end{problem}

\section{Main Results} \label{SEC:MAIN}
In this section we present the methodology proposed in order to formulate the solution of Problem-\ref{prob1} defined in Section-\ref{Sec:problem}. In particular, a Nonlinear Model Predictive Control (NMPC) framework \cite{michalska_1993, frank_1998_quasi_infinite, mayne_2000_nmpc, grune2016nonlinear} is utilized, and a relevant robust NMPC analysis,  the so-called tube-based approach is provided here for the trajectory tracking problem for underactuated systems in presence of disturbances. The proposed feedback control law consists of two parts: an on-line control law which is the outcome of Finite Horizon Optimal Control Problem (FHOCP) for the nominal system dynamics and a state feedback law which guarantees that the real system trajectories always lie within a hyper-tube centered along the nominal trajectories. First we begin by defining the error states and the corresponding transformed constraints. 

\subsection{Error Definitions} \label{sec:errors}
Given the desired trajectory $\bs{p}_d(t) = [{x}_d (t) , {y}_d (t) , {z}_d (t)]^T$, let us define the position errors:
\begin{align}
e_x (t)\!  =\!x\!-\!x_d(t),~ e_y (t)\!=\! y\! -\! y_d(t),~ e_z(t) \!=\! z\!-\! z_d(t)\label{errors}
\end{align}
the projected on the horizontal plane distance error:
\begin{align}
e_d(t) =\sqrt{e_x^2(t)+e_y^2(t)}\label{error_dist}
\end{align}
as well as the projected on the horizontal plane orientation error:
\begin{align}
e_o(t) = \frac{e_x(t)}{e_d(t)} s_{\psi(t)}- \frac{e_y(t)}{e_d(t)}c_{\psi(t)}\label{error_orient}
\end{align}
where $s_\star=\sin(\star)$ and $c_\star=\cos(\star)$. It should be noted that the tracking control problem is solved if the projected on the horizontal plane distance error $e_d$, the vertical error $e_z$ and the orientation error $e_o$ reduce to zero. Moreover it should be noticed that the orientation error $e_o$ is well-defined only for nonzero values of $e_d$. For this respect, the proposed approach has been designed to further guarantee that $e_d(t)\geq \epsilon, \forall t\geq 0$, where $\epsilon$ is an arbitrarily small positive value, that avoids the aforementioned singularity issue when $e_d\rightarrow 0$. Therefore, in view of \eqref{error_dist} and considering the state constraint set $X$ of \eqref{eq4}, we can define a feasible error set given as:
\begin{align}
\mathcal{E}\coloneqq \{\bs{x}\in X :  \sqrt{e_x^2(t)+e_y^2(t)} \geq \epsilon,~ \text{with}~ \epsilon > 0\} \label{feasible_error_set}
\end{align}

Now, differentiating the aforementioned errors of \eqref{errors}-\eqref{error_orient} and employing \eqref{11}, we arrive at:
\begin{align}
\dot{e}_d 
& = \frac{e_x c_\psi + e_y s_\psi }{e_d} u - \frac{e_x \dot{x}_d + e_y \dot{y}_d }{e_d}+ \frac{e_x \omega_1 + e_y \omega_2 }{e_d}\label{eq.15} \\
\dot{e}_z & =w - \dot{z}_d + \omega_3 \label{eq.16} \\
\dot{e}_o  
& = \frac{e_y s_\psi c^2_\psi -e_x s^2_\psi c_\psi}{e_d^2} u +\frac{e_x e_y}{e_d^2} r +\frac{e_y \omega_1 - e_x \omega_2 }{e_d^2} s_\psi c_\psi\nonumber \\ 
&\hspace{5mm}  +\frac{\left(e_x \dot{y}_d -e_y \dot{x}_d\right) s_\psi c_\psi }{e_d^2}\label{eq.17}
\end{align}
By defining the error vector $\bs{e}=[e_d,~e_z,~e_o]^\top$, the aforementioned formulas can be written in matrix form as:
\begin{align}
\dot{\bs{e}}=J(\bs{e},\bs{p}_d)\bs{v}+ \zeta(\bs{e},\dot{\bs{p}}_d)+\xi(\bs{e},\bs{\omega})\label{uncertain_dynamic}
\end{align}
where:
\begin{align*}
&J(\bs{e},\bs{p}_d)\bs{v}
 \coloneqq
\begin{bmatrix}
\frac{e_x c_\psi + e_y s_\psi }{e_d} & 0 & 0 \\
0 & 1 & 0 \\
\frac{e_y s_\psi c^2_\psi-e_x s^2_\psi c_\psi}{e_d^2} & 0 & \frac{e_x e_y}{e_d^2} \nonumber\\
\end{bmatrix}
\begin{bmatrix}
u \\
w \\
r \\
\end{bmatrix} \\
&\!\!\zeta(\bs{e},\dot{\bs{p}}_d)\!\!\coloneqq\!\!
\begin{bmatrix}
-\frac{e_x \dot{x}_d + e_y \dot{y}_d }{e_d} \\
-\dot{z}_d \\
\frac{\left(e_x \dot{y}_d -e_y \dot{x}_d\right) s_\psi c_\psi }{e_d^2} \\
\end{bmatrix}\!,~\xi(\bs{e},\bs{\omega})\!\!\coloneqq\!\!\begin{bmatrix}
\frac{e_x \omega_1 + e_y \omega_2 }{e_d} \\
\omega_3 \\
\frac{e_y \omega_1 - e_x \omega_2 }{e_d^2} s_\psi c_\psi\! \\
\end{bmatrix}
\end{align*}
which are the uncertain error dynamics of the underwater vehicle system. The corresponding nominal error dynamics can be now given by:
\begin{align}
\dot{\bs{\hat{e}}}=J(\bs{\hat{e}},\bs{p}_d)\hat{\bs{v}}+ \zeta(\bs{\hat{e}},\dot{\bs{p}}_d)\label{nominal_dynamic}
\end{align}
{{It should be noticed that we use the ${\bs{\hat{e}}}$ notation for the nominal error state in order to account for the mismatch between the real error state and the nominal one which will be used in the following analysis.}}
\begin{remark}
It should be noted that the constraint set $\mathcal{E}$ in \eqref{feasible_error_set} guarantees that $J(\cdot)$ is non-singular. Thus, there exists a strictly positive constant $\underline{J}$ such that: $\lambda_{\min} \left(\frac{J(\cdot) +J^\top\!(\cdot)}{2}\right) \ge \underline{J} > 0$. 
\end{remark}
{{\begin{remark}\label{remark_2}
It can easily be shown that the function $\xi(\bs{e},\bs{\omega})$ appearing in \eqref{uncertain_dynamic} is a bounded function and there exists a strictly positive constant $\tilde{\xi}$ such that: $|| \xi(\bs{e},\bs{\omega}) || \leq \tilde{\xi}$, $\tilde{\xi} > 0$. In particular, by noting that $\frac{|e_x|}{\sqrt{e_x^2+e_y^2}} \le 1$, $\frac{|e_y|}{\sqrt{e_x^2+e_y^2}} \le 1$ and $|s_\psi| \le 1$, $|c_\psi| \le 1$, we deduce that $\tilde{\xi}$ does not depend of the bounds of $\bs{e}$. 
\end{remark}	}}
\subsection{State Feedback Design}
Consider the feedback law:
\begin{align}
\bs{v}= \bs{\hat{v}}(\bs{\hat{e}})+\kappa(\bs{e},\bs{\hat{e}}) \label{state_feedback}
\end{align}
which consists of a nominal control action $\bs{\hat{v}}(\bs{\hat{e}}) \in V$ and a state feedback law $ \kappa : \mathbb{R}^3 \times \mathbb{R}^3 \rightarrow V$.  The control action $\bs{\hat{v}}(\bs{\hat{e}})$ will be the outcome of a FHOCP solved for the nominal dynamics \eqref{nominal_dynamic} while the state feedback law $\kappa(\cdot)$ is designed in order to guarantee that the real trajectory $\bs{e}(t)$ (i.e., the solution of  \eqref{uncertain_dynamic}) always remain inside a bounded tube centered along the nominal trajectory $\bs{\hat{e}}(t)$ i.e., the solution of \eqref{nominal_dynamic}. Now let us define by $\bs{z}(t)=[z_1(t),~z_2(t),~z_3(t)]^\top$, and then the deviation between the real error state and the nominal one is given as:
\begin{align}
\bs{\rho}(t) \coloneqq \bs{e}(t) - \bs{\hat{e}}(t) \label{new_error}
\end{align}
\noindent with  $\bs{\rho}(0)=\bs{e}(0)-\bs{\hat{e}}(0)=\bs{0}$. In view of \eqref{new_error}, the dynamics of $\bs{\rho}(t)$ can be given as:
\begin{align}
\dot{\bs{\rho}} & = \dot{\bs{e}}-\dot{\hat{\bs{e}}} \notag \\
& = J(\bs{e}, \bs{p}_d) \bs{v} - J(\hat{\bs{e}}, \bs{p}_d) \hat{\bs{v}} + \zeta(\bs{e}, \dot{\bs{p}}_d)- \zeta(\hat{\bs{e}}, \dot{\bs{p}}_d) + \xi(\bs{e}, \bs{\omega}). \notag
\end{align}
By adding and subtracting the term $J(\bs{e}, \bs{p}_d)\hat{\bs{v}}$ and by defining the function $h(\bs{e}, \hat{\bs{v}}) \coloneqq J(\bs{e}, \bs{p}_d)\hat{\bs{v}}$, the latter becomes:
\begin{align}
\dot{\bs{\rho}}
= & \ h(\bs{e}, \hat{\bs{v}})- h(\hat{\bs{e}}, \hat{\bs{v}})+J(\bs{e}, \bs{p}_d) (\bs{v} - \hat{\bs{v}}) \notag \\ 
& + \zeta(\bs{e}, \dot{\bs{p}}_d)- \zeta(\hat{\bs{e}}, \dot{\bs{p}}_d) + \xi(\bs{e}, \bs{\omega}). \label{eq:z_dynamics}
\end{align}
Note that for the continuously differentiable functions $h(\cdot)$ and $\zeta(\cdot)$ the following hold:
\begin{subequations}
	\begin{align}
	\|h(\bs{e}, \hat{\bs{v}})- h(\hat{\bs{e}}, \hat{\bs{v}})\| & \le \mathcal{L}_1 \|\bs{e}-\hat{\bs{e}}\| = \mathcal{L}_1 \|\bs{\rho}\|, \label{eq:lip_1} \\
	\|\zeta(\bs{e}, \dot{\bs{p}}_d)- \zeta(\hat{\bs{e}}, \dot{\bs{p}}_d)\| & \le \mathcal{L}_2 \|\bs{e}-\hat{\bs{e}}\| = \mathcal{L}_2 \|\bs{\rho}\|, \label{eq:lip_2}
	\end{align}
\end{subequations}
where $\mathcal{L}_1$, $\mathcal{L}_2 > 0$ stand for their Lipschitz constants.
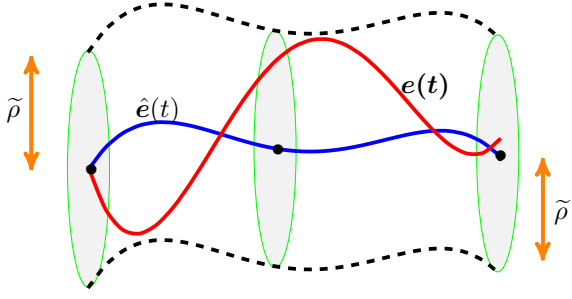
\begin{figure}[t!]
	\vspace{2mm}
	\centering
	\begin{tikzpicture}[scale = 0.8]
	\draw[color = green, fill=black!5] (-0.55,-0.48) ellipse (0.35cm and 1.97cm);
	\draw[color = green, fill=black!5] (2.55,-0.15) ellipse (0.35cm and 1.97cm);
	\draw[color = green, fill=black!5] (6.25,-0.22) ellipse (0.35cm and 1.97cm);
	
	\draw[scale=0.7,domain=-0.8:9,smooth,variable=\x,blue, line width = 0.05cm] plot ({\x},{-0.005*\x*\x*\x*\x+0.0868*\x*\x*\x-0.4554*\x*\x+0.6542*\x+0.1467});
	
	\draw[scale=0.7,domain=-0.8:9,smooth,variable=\x,red, line width = 0.05cm] plot ({\x},{0.93+0.0139*\x*\x*\x*\x-0.2491*\x*\x*\x+1.2240*\x*\x-0.7324*\x-3.0411});
	
	\draw[scale=0.7,domain=-0.8:9,smooth,variable=\x,black, line width = 0.05cm, dashed] plot ({\x},{-0.005*\x*\x*\x*\x+0.0868*\x*\x*\x-0.4554*\x*\x+0.6542*\x+0.1467+2.8});
	\draw[scale=0.7,domain=-0.8:9,smooth,variable=\x,black, line width = 0.05cm, dashed] plot ({\x},{-0.005*\x*\x*\x*\x+0.0868*\x*\x*\x-0.4554*\x*\x+0.6542*\x+0.1467-2.8});
	
	\draw (6.29, -0.3) node[circle, inner sep=0.8pt, fill=black, label={below:{$ $}}] (A1) {};
	
	\draw [color=orange,thick,->,>=stealth', line width = 0.5mm](-1.5, -0.5) to (-1.5, 1.4);
	\draw [color=orange,thick,->,>=stealth', line width = 0.5mm](-1.5, 1.4) to (-1.5, -0.5);
	\draw [color=orange,thick,->,>=stealth', line width = 0.5mm](7.0, -0.3) to (7.0, -2.0);
	\draw [color=orange,thick,->,>=stealth', line width = 0.5mm](7.0, -2.0) to (7.0, -0.3);
	\node at (-1.8, 0.5) {$\widetilde{\rho}$};
	\node at (7.3, -1.2) {$\widetilde{\rho}$};
	\node at (2.6,-0.16) {$\bullet$};
	\node at (6.3,-0.27) {$\bullet$};
	\node at (-0.5,-0.50) {$\bullet$};
	\node at (5.05, 0.9) {$\bs{e(t)}$};
	\node at (0.6, 0.5) {$\hat{\bs{e}}(t)$};
	\end{tikzpicture}
	\caption{The hyper-tube centered along the trajectory $\hat{\bs{e}}(t)$ (depicted by the blue line) with radius $\widetilde{\rho}$. Under the proposed control law, the real trajectory $\bs{e}(t)$ (depicted with red line) lies inside the hyper-tube for all times, i.e., $\|\bs{\rho}(t)\| \le \widetilde{\rho}$, $\forall t \in \mathbb{R}_{\ge 0}$.}\vspace{-3mm}
	\label{fig:tube}
\end{figure}
Now based on the aforementioned analysis the following Lemma can be stated: 
\begin{lemma} \label{lemma:tube}
The state feedback law designed by:
\begin{align} \label{eq:kappa_law}
\kappa(\bs{e}, \bs{\hat{e}}) \coloneqq - \sigma (\bs{e} - \bs{\hat{e}}),
\end{align}
where $\sigma$ is chosen such that:
\begin{align} \label{eq:sigma}
\sigma > \frac{\mathcal{L}_1+\mathcal{L}_2}{\underline{J}}.
\end{align}
renders the set:
\begin{align*}
\mathcal{P} \coloneqq \left\{\bs{\rho} \in \mathbb{R}^n: \|\bs{\rho}\| \le \widetilde{\rho} \right\},
\end{align*}
an RCI set for the error dynamics \eqref{eq:z_dynamics}, according to Definition \ref{def:RPI_set}, with:\vspace{-3mm}
\begin{align*}
\widetilde{\rho} \coloneqq \frac{\widetilde{\xi}}{\sigma \underline{J} - \mathcal{L}_1 - \mathcal{L}_2} > 0.
\end{align*}
\end{lemma}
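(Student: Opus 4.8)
The plan is to establish the RCI property of $\mathcal{P}$ through a Lyapunov argument applied to the closed-loop deviation dynamics. I would take the candidate function $V(\bs{\rho}) \coloneqq \tfrac{1}{2}\|\bs{\rho}\|^2 = \tfrac{1}{2}\bs{\rho}^\top \bs{\rho}$ and show that the feedback law \eqref{eq:kappa_law} forces $\dot{V} < 0$ everywhere strictly outside the ball $\mathcal{P}$, so that $\mathcal{P}$ is forward invariant. Since the initialization gives $\bs{\rho}(0) = \bs{e}(0) - \hat{\bs{e}}(0) = \bs{0} \in \mathcal{P}$, forward invariance immediately yields $\|\bs{\rho}(t)\| \le \widetilde{\rho}$ for all $t \ge 0$ and for every admissible $\bs{\omega} \in \Omega$, which is exactly the Robust Control Invariance of Definition \ref{def:RPI_set} with the explicit stabilizing law $\kappa(\cdot)$.

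The first concrete step is to substitute the control law into the deviation dynamics \eqref{eq:z_dynamics}. Because $\bs{v} = \hat{\bs{v}}(\hat{\bs{e}}) + \kappa(\bs{e}, \hat{\bs{e}})$ with $\kappa(\bs{e}, \hat{\bs{e}}) = -\sigma\bs{\rho}$ from \eqref{eq:kappa_law}, the actuated term reduces to $J(\bs{e}, \bs{p}_d)(\bs{v} - \hat{\bs{v}}) = -\sigma\, J(\bs{e}, \bs{p}_d)\,\bs{\rho}$. Differentiating $V$ along the resulting dynamics gives $\dot{V} = \bs{\rho}^\top \dot{\bs{\rho}}$, which I would then bound term by term. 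For the dissipation term I would use the symmetric-part decomposition $\bs{\rho}^\top J \bs{\rho} = \bs{\rho}^\top \tfrac{J + J^\top}{2}\bs{\rho} \ge \lambda_{\min}\!\left(\tfrac{J+J^\top}{2}\right)\|\bs{\rho}\|^2 \ge \underline{J}\,\|\bs{\rho}\|^2$, invoking the uniform lower bound $\underline{J}>0$ guaranteed by the first Remark on the feasible error set $\mathcal{E}$; this produces $-\sigma\bs{\rho}^\top J \bs{\rho} \le -\sigma\underline{J}\|\bs{\rho}\|^2$. The two difference terms are handled by Cauchy--Schwarz together with the Lipschitz estimates \eqref{eq:lip_1}--\eqref{eq:lip_2}, giving $\bs{\rho}^\top\big(h(\bs{e},\hat{\bs{v}})-h(\hat{\bs{e}},\hat{\bs{v}})\big) + \bs{\rho}^\top\big(\zeta(\bs{e},\dot{\bs{p}}_d)-\zeta(\hat{\bs{e}},\dot{\bs{p}}_d)\big) \le (\mathcal{L}_1+\mathcal{L}_2)\|\bs{\rho}\|^2$. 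Finally the disturbance term is bounded by $\bs{\rho}^\top \xi(\bs{e},\bs{\omega}) \le \widetilde{\xi}\,\|\bs{\rho}\|$ using the uniform bound of Remark \ref{remark_2}.

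Collecting these estimates yields the scalar differential inequality $\dot{V} \le -(\sigma\underline{J} - \mathcal{L}_1 - \mathcal{L}_2)\|\bs{\rho}\|^2 + \widetilde{\xi}\|\bs{\rho}\|$, which I would factor as $\dot{V} \le -\|\bs{\rho}\|\big[(\sigma\underline{J}-\mathcal{L}_1-\mathcal{L}_2)\|\bs{\rho}\| - \widetilde{\xi}\big]$. The choice of $\sigma$ in \eqref{eq:sigma} guarantees the coefficient $\sigma\underline{J}-\mathcal{L}_1-\mathcal{L}_2 > 0$, so the bracket is positive precisely when $\|\bs{\rho}\| > \widetilde{\rho} = \widetilde{\xi}/(\sigma\underline{J}-\mathcal{L}_1-\mathcal{L}_2)$, forcing $\dot{V} < 0$ outside $\mathcal{P}$ and $\dot{V} \le 0$ on its boundary. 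I expect the main subtlety to be the invariance conclusion rather than the bounding itself: I would argue that whenever a trajectory reaches the boundary $\|\bs{\rho}\| = \widetilde{\rho}$ the nonpositivity of $\dot V$ there prevents escape, so that $\mathcal{P}$ is never exited. Two points deserve care in making this rigorous, namely that $\underline{J}$ is a \emph{uniform} lower bound valid on all of $\mathcal{E}$ (so the dissipation estimate holds along the whole closed-loop trajectory) and that $\widetilde{\xi}$ is \emph{independent} of $\bs{\rho}$ and of the disturbance realization, as emphasized in Remark \ref{remark_2}; both are needed for the bound on $\dot V$ to hold uniformly in $t$ and in $\bs{\omega}\in\Omega$.
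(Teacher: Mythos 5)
Your proposal is correct and follows essentially the same route as the paper's proof: the same quadratic Lyapunov function $\tfrac{1}{2}\|\bs{\rho}\|^2$, the same symmetric-part decomposition of $J$ to extract the $-\sigma\underline{J}\|\bs{\rho}\|^2$ dissipation, the same Lipschitz and $\widetilde{\xi}$ bounds, and the same conclusion from $\bs{\rho}(0)=\bs{0}$. Your added attention to the uniformity of $\underline{J}$ and $\widetilde{\xi}$ and to the boundary non-escape argument is a slight refinement of what the paper states more tersely, but it is not a different method.
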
\vspace{2mm}
\textbf{Proof : } Consider the positive definite function $\Lambda(\bs{\rho}) = \frac{1}{2} \|\bs{\rho}\|^{2}$. The time derivative of $\Lambda$ along the trajectories of the system \eqref{eq:z_dynamics} is given by:
\begin{align}
\dot{\Lambda}(\bs{\rho}) = & \ \bs{\rho}^\top \dot{\bs{\rho}} \notag \\
= & \ \bs{\rho}^\top \left[h(\bs{e}, \hat{\bs{v}})- h(\hat{\bs{e}}, \hat{\bs{v}}) \right]+ \bs{\rho}^\top \left[ \zeta(\bs{e}, \dot{\bs{p}}_d)- \zeta(\hat{\bs{e}}, \dot{\bs{p}}_d)\right] \notag \\ 
& + \bs{\rho}^\top J(\bs{e}, \bs{p}_d) (\bs{v} - \hat{\bs{v}}) + \bs{\rho}^\top \xi(\bs{e}, \bs{\omega}). \notag
\end{align}
By employing \eqref{eq:lip_1}-\eqref{eq:lip_2} and the boundedness of $\| \xi(\cdot) \|$ given in Remark \ref{remark_2}, i.e, $\|\xi(\bs{e},\bs{\omega})\| \le \widetilde{\xi}$, $\widetilde{\xi} > 0$, the latter becomes:
\begin{align}
\dot{\Lambda}(\bs{\rho})\!\le\!(\mathcal{L}_1+\mathcal{L}_2) \|\bs{\rho}\|^2\!+\!\bs{\rho}^\top J(\bs{e}, \bs{p}_d) (\bs{v}\! -\! \hat{\bs{v}})\! +\!\widetilde{\xi}  \|\bs{\rho}\|. \label{eq:lyap_1}
\end{align}
Now, by taking $J = \frac{J+J^\top}{2}\!+\!\frac{J-J^\top}{2}$; using the fact that $y^\top\! \frac{J-J^\top}{2}y\! =\! 0$, $\forall y$; and substituting the control law \eqref{eq:kappa_law}, the inequality of \eqref{eq:lyap_1} yields:
\begin{align*}
\dot{\Lambda}(\bs{\rho}) \le \left[ - (\sigma \underline{J}-\mathcal{L}_1-\mathcal{L}_2) \|\bs{\rho}\| +\widetilde{\xi} \right]  \|\bs{\rho}\|.
\end{align*}
Thus, it holds that:
\begin{align}
\dot{\Lambda}(\bs{\rho}) < 0 \ {\rm when} \ \|\bs{\rho}\| > \frac{\widetilde{\xi}}{\sigma \underline{J} - \mathcal{L}_1 - \mathcal{L}_2}.  
\end{align}
Moreover, owing to the fact that $\bs{\rho}(0) = 0$, we have:
\begin{align*}
\|\bs{\rho}(t)\| \le \frac{\widetilde{\xi}}{\sigma \underline{J} - \mathcal{L}_1 - \mathcal{L}_2}, \  \forall t \ge 0.
\end{align*}
which concludes the proof. \hspace{40mm} $\square$

{{A graphical illustration of the proposed tube based control strategy is given in Fig \ref{fig:tube}. Under the proposed control scheme \eqref{state_feedback}, the real trajectory $\bs{e}(t)$ lies inside the hyper-tube which is centered along the nominal trajectory $\bs{\hat{e}}$ with radius $\widetilde{\rho}$ for all times, i.e., $\|\bs{\rho}(t)\| \le \widetilde{\rho}$, $\forall t \in \mathbb{R}_{\ge 0}$. }}

\subsection{Online Optimal Control}
As mentioned before, the control action $\bs{\hat{v}}(\bs{\hat{e}})$ in eq. \eqref{state_feedback} will be the outcome of a FHOCP solved for the nominal dynamics eq. \eqref{nominal_dynamic}. In this respect, consider a sequence of sampling times $\{t_k\}$, $k \in \mathbb{N}$, with a constant sampling period $0 < \delta_t < T$, where $T$ is a prediction horizon such that $t_{k+1} \coloneqq t_{k} + \delta_t$, $\forall k \in \mathbb{N}$. At each sampling time $t_k$, a FHOCP is solved as follows:
\begin{subequations}
	\begin{align}
	&\hspace{-4mm}\min\limits_{\hat{\bs{v}}(\cdot)} \left\{  \|\hat{\bs{e}}(t_k+T)\|^2_{\scriptscriptstyle P}\! +\! \int_{t_k}^{t_k+T}\!\!\! \Big[ \|\hat{\bs{e}}(\mathfrak{s})\|^2_{\scriptscriptstyle Q}\! +\!\|\hat{u}(\mathfrak{s})\|^2_{\scriptscriptstyle R} \Big] d\mathfrak{s} \right\} \hspace{0mm} \label{eq:mpc_cost_function} \hspace{-7mm}\\
	&\hspace{-3mm}\text{subject to:} \notag \\
	&\hspace{-3mm} \dot{\hat{\bs{e}}}(\mathfrak{s}) = J(\bs{\hat{e}}(\mathfrak{s}),\bs{p}_d)\hat{\bs{v}}+ \zeta(\bs{\hat{e}}(\mathfrak{s}),\dot{\bs{p}}_d), \ \ \hat{\bs{e}}(t_k) = \bs{e}(t_k), \label{eq:diff_mpc} \\
	&\hspace{-3mm} \hat{\bs{e}}(\mathfrak{s}) \in \overline{\mathcal{E}}, \ \ \hat{\bs{v}}(\mathfrak{s}) \in \overline{V},  \ \ \forall \mathfrak{\delta_t} \in [t_k,t_k+T], \label{eq:mpc_constrained_set} \\
	&\hspace{-3mm} \hat{\bs{e}}(t_k+T)\in \mathcal{F}, \label{eq:mpc_terminal_set}
	\end{align}
\end{subequations}
where $Q$, $P \in \mathbb{R}^{3 \times 3}$ and $R \in \mathbb{R}^{3 \times 3}$ are positive definite gain matrices. Moreover, $\overline{\mathcal{E}}$, $ \overline{V}$ and $\mathcal{F}$ are designing sets that are defined in order to guarantee that while the solution of FHOCP \eqref{eq:mpc_cost_function}-\eqref{eq:mpc_terminal_set} is derived for the nominal dynamics \eqref{nominal_dynamic}, the real trajectory $\bs{e}(t)$ and control inputs $\bs{v}(t)$ satisfy the corresponding state and input constraint sets  $\mathcal{E}$  and  $V$ respectively. More specifically, the following modification  is performed: 
\begin{align}
\overline{\mathcal{E}} \coloneqq \mathcal{E} \ominus \mathcal{P}, \ \ \overline{V} \coloneqq V \ominus \left[ -\sigma \circ \mathcal{P} \right].
\end{align}
This intuitively means that the sets $\mathcal{E}$, $V$ are tightened accordingly, in order to guarantee that while the nominal states $\hat{\bs{e}}$ and the nominal control input $\hat{\bs{v}}$ are calculated, the corresponding real error states $\bs{e}$ and real control input $\bs{v}$ satisfy the state and input constraints $\mathcal{E}$, $\mathcal{P}$ and $\mathcal{U}$, respectively\footnote{This constitutes a standard constraints set modification technique adopted in tube-based NMPC frameworks. For more details see \cite{yu_2013_tube}).}. Define the \emph{terminal set} by:
\begin{align} \label{eq:terminal_set_F}
\mathcal{F} \coloneqq \big\{\hat{\bs{e}} \in \overline{\mathcal{E}} : \|\hat{\bs{e}}\|_{\scriptscriptstyle P} \le \bar{\epsilon} \big\}, \ \  \bar{\epsilon}  > 0,
\end{align}
which is employed here in order to enforce the stability of the system \cite{frank_1998_quasi_infinite}. 
\subsubsection*{Newly Detected Obstacles}
as mentioned before, the obstacles within the workspace may be detected online by the vehicle's on--board sensors (e.g., multi--beam imaging or side scan sonar). In such a case, it should be assured that the solution of the FHOCP corresponds to the region that is accessible by the sensing capabilities of the vehicle.  This intuitively means it is required any potential new obstacles to be visible by the vehicle  even in the worst case (i.e., maximum velocity of the robot under maximum disturbances). Thus, assuming that $\bar{R}$ denotes the sensing range of the system as it is already stated in Section-\ref{Geometry}, the prediction
horizon $T$ should be set as follows:\vspace{-0mm}
\begin{align}
T\leq \frac{\bar{R}}{\bar{V}+\widetilde{\xi}}\label{new_obstacles}
\end{align} 	
\noindent where $\bar{V}$ is defined in \eqref{Vbar}.
\begin{remark}
It should be noticed that in a real scenario, AUVs use sonar sensors to obtain knowledge about the environment. The detection range of these sonar sensors (i.e., $\tilde{R}$) depends on many factors, including the frequency. Low frequency sonars can detect objects at very long distance, depending on the sound propagation environment.  Medium frequency sonars (typically operating between $7.5kHz$ and up to $30kHz$) can detect a object at a multiple nautical miles. On the other hands, high frequency sonars ($>100kHz$), typically used for underwater inspection can detect smaller objects at a few hundreds meters (i.e., $>100m$). On the other hand, as stated before, for needs of various common underwater tasks, the vehicle is required to move with relatively low speed. Thus, in view of \eqref{Vbar}, in a real scenario the predefined upper bound of the vehicle velocity can be tuned accordingly to the capability sensing range $\tilde{R}$ of the available sonar system in order to get an valuable prediction horizon enough for solving the FHOCP \eqref{eq:mpc_cost_function}-\eqref{eq:mpc_terminal_set}. 
\end{remark}
Now we are ready to state the main result of this work:
\begin{theorem}
Suppose that at time $t = 0$ the FHOCP \eqref{eq:mpc_cost_function}-\eqref{eq:mpc_terminal_set} is feasible. Then, the proposed feedback control law \eqref{state_feedback}, \eqref{eq:kappa_law}, renders the closed-loop system Input-to-State stable (ISS) with respect to the disturbances, for every initial condition $\hat{\bs{e}}(0)  \in \mathcal{E}$.
\end{theorem}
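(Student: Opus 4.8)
The plan is to follow the standard tube-based robust NMPC argument in two stages: first establish recursive feasibility and asymptotic stability of the \emph{nominal} closed-loop system driven by the FHOCP, then invoke Lemma~\ref{lemma:tube} to transfer this property to the \emph{real} perturbed system and upgrade it to ISS. First I would pin down the terminal ingredients. The argument rests on the existence of a local terminal controller $\kappa_f(\hat{\bs{e}})$ for the nominal dynamics \eqref{nominal_dynamic} that renders the set $\mathcal{F}$ of \eqref{eq:terminal_set_F} positively invariant, keeps $\kappa_f(\hat{\bs{e}}) \in \overline{V}$ on $\mathcal{F}$, and makes the terminal penalty $\|\hat{\bs{e}}\|_{\scriptscriptstyle P}^2$ a local control Lyapunov function satisfying $\frac{d}{dt}\|\hat{\bs{e}}\|_{\scriptscriptstyle P}^2 \le -\|\hat{\bs{e}}\|_{\scriptscriptstyle Q}^2 - \|\kappa_f(\hat{\bs{e}})\|_{\scriptscriptstyle R}^2$ for all $\hat{\bs{e}} \in \mathcal{F}$. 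Since $J(\cdot)$ is nonsingular on $\mathcal{E}$, this can be secured by choosing $P$ from a Lyapunov inequality for the linearization of \eqref{nominal_dynamic} about the origin, with $\bar{\epsilon}$ taken small enough that the quadratic bounds dominate the higher-order terms inside $\mathcal{F}$.

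Next I would establish recursive feasibility. Assuming the FHOCP is feasible at $t_k$ with minimizer $\hat{\bs{v}}^\ast(\cdot)$ and predicted state $\hat{\bs{e}}^\ast(\cdot)$, I would build the usual candidate at $t_{k+1}$: retain the tail $\hat{\bs{v}}^\ast$ on $[t_{k+1}, t_k+T]$ and append $\kappa_f$ on $[t_k+T, t_{k+1}+T]$. Invariance of $\mathcal{F}$ keeps the terminal constraint \eqref{eq:mpc_terminal_set} satisfied, while $\hat{\bs{v}}^\ast \in \overline{V}$ and $\kappa_f \in \overline{V}$ give input feasibility, and the tightened state set $\overline{\mathcal{E}} = \mathcal{E} \ominus \mathcal{P}$ guarantees that, via Lemma~\ref{lemma:tube}, the real state $\bs{e} = \hat{\bs{e}} + \bs{\rho}$ with $\|\bs{\rho}\| \le \widetilde{\rho}$ remains in $\mathcal{E}$; the matched tightening of $\overline{V}$ absorbs the feedback term $-\sigma\bs{\rho}$ so that the real input $\bs{v} = \hat{\bs{v}} + \kappa \in V$.

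Then I would use the optimal value function $\mathcal{V}^\ast(\bs{e}(t_k))$ of \eqref{eq:mpc_cost_function} as a Lyapunov function for the nominal closed loop. Evaluating the cost of the candidate input and invoking the terminal decrease from the first step yields the familiar inequality $\mathcal{V}^\ast(\bs{e}(t_{k+1})) - \mathcal{V}^\ast(\bs{e}(t_k)) \le -\int_{t_k}^{t_{k+1}} \big[\|\hat{\bs{e}}\|_{\scriptscriptstyle Q}^2 + \|\hat{u}\|_{\scriptscriptstyle R}^2\big]\,d\mathfrak{s}$, which together with the quadratic lower and upper bounds on $\mathcal{V}^\ast$ proves asymptotic stability of the nominal closed loop to the origin. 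Finally, writing $\bs{e}(t) = \hat{\bs{e}}(t) + \bs{\rho}(t)$ and combining the nominal convergence with the uniform tube bound $\|\bs{\rho}(t)\| \le \widetilde{\rho} = \widetilde{\xi}/(\sigma\underline{J} - \mathcal{L}_1 - \mathcal{L}_2)$ of Lemma~\ref{lemma:tube}, where $\widetilde{\xi}$ scales with the disturbance bound $\bar{\omega}$, gives an ISS estimate $\|\bs{e}(t)\| \le \beta(\|\bs{e}(0)\|, t) + \gamma(\bar{\omega})$ with $\beta \in \mathcal{KL}$ and $\gamma \in \mathcal{K}$, which is precisely the claimed ISS property.

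I expect the main obstacle to be the interplay between recursive feasibility and the terminal set: one must verify that the tightened sets $\overline{\mathcal{E}}$, $\overline{V}$ are nonempty and that $\mathcal{F} \subseteq \overline{\mathcal{E}}$ admits an invariant local controller for the \emph{time-varying} nominal error dynamics, since the dependence on $\bs{p}_d(t)$ and its derivatives renders $J$ and $\zeta$ time-varying. This time dependence is what forces the horizon bound \eqref{new_obstacles} and the smallness of $\bar{\epsilon}$, and it is the delicate point on which the whole feasibility-to-stability chain hinges.
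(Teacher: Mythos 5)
Your proposal is correct and follows essentially the same route as the paper: the paper's own proof likewise splits into recursive feasibility of the FHOCP (which it delegates entirely to cited references), asymptotic stability of the nominal closed loop yielding a $\mathcal{KL}$ bound $\|\hat{\bs{e}}(t)\| \le \beta(\|\hat{\bs{e}}(0)\|,t)$, and the combination with the tube bound $\|\bs{\rho}(t)\| \le \widetilde{\rho}$ from Lemma~\ref{lemma:tube} to conclude $\|\bs{e}(t)\| \le \beta(\|\hat{\bs{e}}(0)\|,t) + \widetilde{\rho}$. You simply fill in the standard terminal-ingredient and value-function-decrease details that the paper omits by citation.
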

\begin{proof}
The proof of the theorem follows similar arguments presented in our previous work \cite{alex_IJRNC_2018}. Due to the fact that only the state of the nominal system is used while the FHOCP \eqref{eq:mpc_cost_function}-\eqref{eq:mpc_terminal_set} is solved, the on-line optimization does not depend on the disturbances. Thus, the \emph{feasibility proof} follows same arguments as in \cite{alex_ACC_2018, alex_IJRNC_2018,frank_1998_quasi_infinite} and is omitted here. Regarding the \emph{convergence analysis}, due to the fact that the set $\mathcal{P}$ is an RPI set, it holds that:
\begin{align} \label{eq:proof1}
\|\bs{\rho}(t)\| \le \widetilde{\rho}, \forall t \ge 0.
\end{align}
Due to the asymptotic stability of the nominal system, there exists a class $\mathcal{KL}$ function $\beta$ such that:
\begin{align} \label{eq:proof2}
\|\hat{\bs{e}}(t)\| \le \beta(\|\hat{\bs{e}}(0)\|, t), \forall t \ge 0.
\end{align}
By combining \eqref{new_error}, \eqref{eq:proof1} and \eqref{eq:proof2}, we get:
\begin{align*}
\|\bs{e}(t)\| \le \beta(\|\hat{\bs{e}}(0)\|, t) + \widetilde{\rho},
\end{align*}
for every $t \ge 0$, which leads to the conclusion of the proof.
\end{proof}
\section{Simulation Results}\label{SEC:SIM}
In this section we consider a simulation study in order to demonstrate the efficiency of the proposed approach. The simulation results were conducted using a dynamic simulation environment built in MATLAB \cite{heshmati2018robust, alex_shahab_ifac} with sampling time $0.1 sec$, which is common in a real time operation
with an underwater robotic system. We considered the tracking control problem for an underactuated AUV operating in a workspace including two obstacles which lie in $\bs{p}_{1}=[3,~0,~0]^\top$ and $\bs{p}_{2}=[-3,~0,~0]^\top$ respectively. The desired trajectory is a circle in horizontal plane defined by $\bs{p}_d(t) = [3\sin(\frac{\pi}{25}t) ,3\cos(\frac{\pi}{25}t) , 0]^T$. Notice that the desired trajectory that is required to be tracked by the AUV coincides with obstacles positions. The predefined upper bound of the vehicle velocities are defined as: $\bar{u}=0.4\frac{m}{s}$, $\bar{w}=0.3\frac{m}{s}$ and $\bar{r}=0.5\frac{rad}{s}$. Moreover, the capability sensing range and the horizon of the FHOCP are considered as $\bar{R}=1.5$ and $T=8*dt=0.8\sec$ respectively, satisfying the condition \eqref{new_obstacles}. Notice that the obstacles are detected 
and are considered by the controller when they are within the sensing range of the robot. The initial configuration is depicted in Fig \ref{fig2}.  In addition, in the subsequent simulation study the dynamics of the considered AUV were affected by external disturbances in the form of slowly time varying sea currents acting along $x$, $y$ and $z$ axes modeled by the corresponding velocities $\omega_1=0.1\sin(2\frac{\pi}{15}t)\frac{m}{s}$, $\omega_2=0.1\cos(2\frac{\pi}{15}t)\frac{m}{s}$ and $\omega_3=0.1\sin(2\frac{\pi}{15}t)\frac{m}{s}$. Furthermore, the parameter $\epsilon$ defined in \eqref{feasible_error_set} is set to $\epsilon=0.1$.
\begin{figure}[t!]
	\centering
	\includegraphics[scale=0.33]{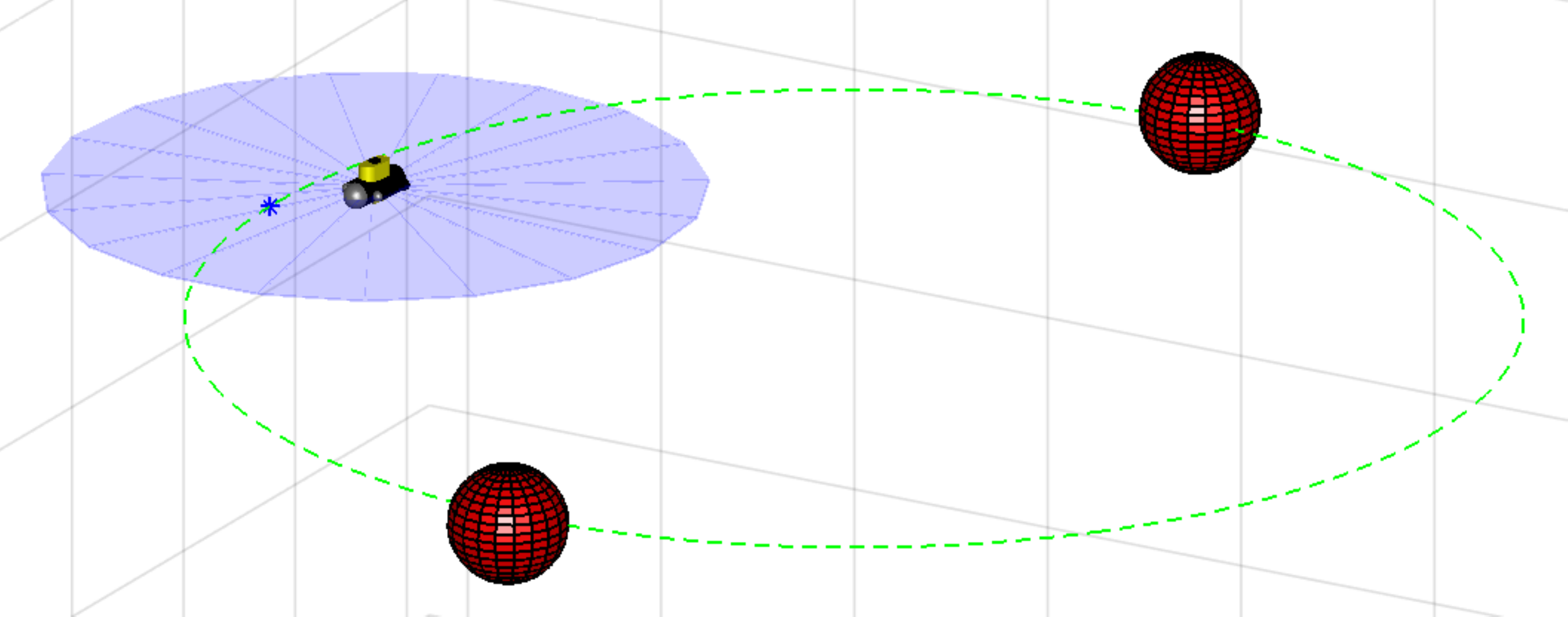}	
	\caption{Simulation setup: underactuated AUV operating in workspace including obstacles. The sensing range of the robot is indicated by a blue circle region around the robot. The desired trajectory is depicted with green dashed line which coincides with obstacles positions. The purpose of the controller is to track the real time evolution of the desired trajectory which is depicted by a blue star. The obstacles are detected and considered to the controller when they are within the sensing range $\bar{R}$.}\label{fig2}\vspace{-1mm}
\end{figure}
The simulation scenario has been conducted over a time period of $100 \sec$ in which the robot is required to perform two encirclements, which means that it should avoid the obstacles twice each. The results are given in Fig. \ref{fig3}-Fig. \ref{fig4}. More specifically the errors evolution is depicted in Fig. \ref{fig3}. It should be noted that the four peaks in $e_d$ and $e_z$ stand for the fact that the robot is leaving the desired trajectory in order to avoid the obstacles. In addition, it can be observed that the error $e_d$ remains always greater than $\epsilon=0.1$, i.e., $e_d(t)\geq \epsilon, \forall t \ge 0$. Moreover, the control inputs evolution is depicted in  Fig. \ref{fig4}. It can be witnessed that the control input constraints always satisfied.
\begin{figure}[t!]
	\centering
	\includegraphics[scale=0.5]{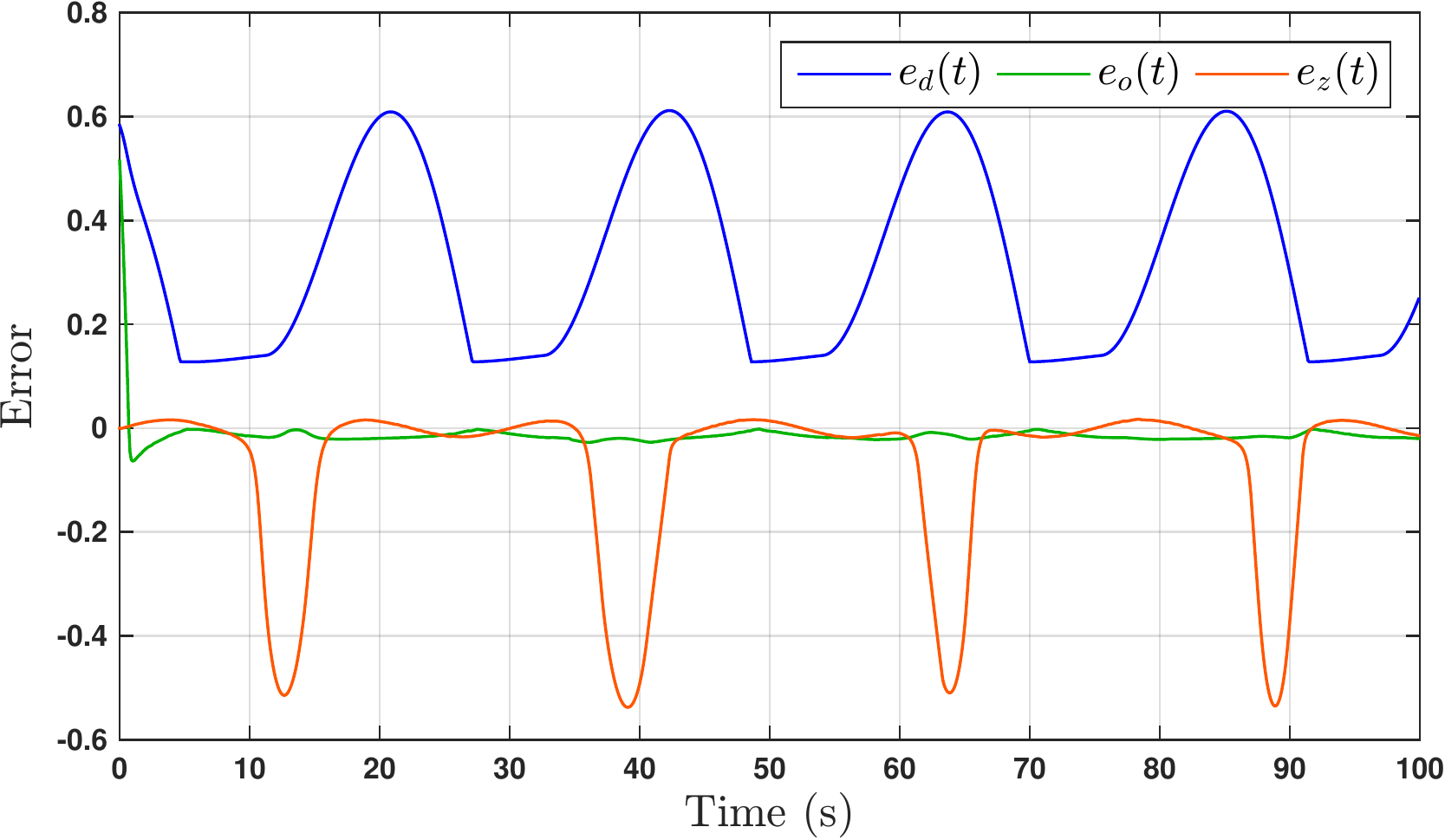}	
	\caption{The evolution of the error over time. The four peaks in $e_d$ and $e_z$ stand for the fact that the robot is leaving the desired trajectory in order to avoid the obstacles.}\label{fig3}\vspace{-4mm}
\end{figure}
\begin{figure}[t!]
	\centering
	\includegraphics[scale=0.46]{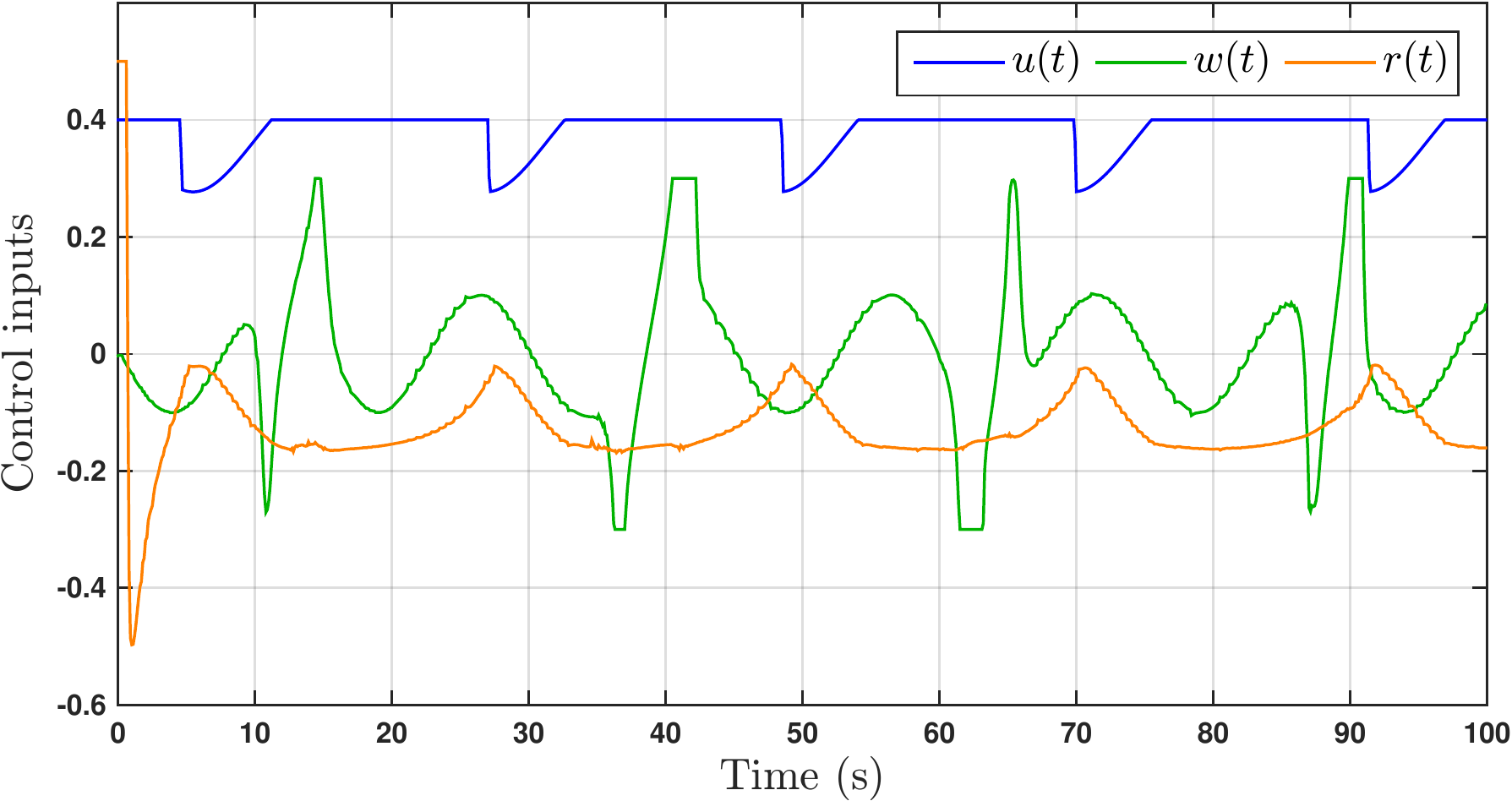}	
	\caption{The evolution of the control input signals over time.}\label{fig4}\vspace{-6mm}
\end{figure}

\noindent\textbf{Video:}\\
A video demonstrating the simulation of this section can be found in the following link:

\hspace{12mm}{\tt \small \href{https://youtu.be/jJfjnp-mw\_s}{https://youtu.be/jJfjnp-mw\_s}}

\section{Conclusions and Future work}\label{SEC:CON}
This paper presents a robust trajectory tracking control for underactuated Autonomous Underwater Vehciles operating in a constrained workspace including obstacles. The purpose of the controller is to steer the underactuated AUV on a desired trajectory inside a constrained and dynamic workspace. The workspace knowledge (i.e., obstacles positions) is constantly updated online via the vehicle’s sensors. Obstacle avoidance with any of the detected obstacles is guaranteed, despite the
presence of external disturbances. Moreover, various constraints such as: obstacles, workspace boundaries, predefined upper bound of the vehicle velocity (requirements for various underwater tasks such as seabed inspection, mosaicking etc.) are considered during the control design. The proposed feedback control law consists of two parts: i) a Finite Horizon Optimal Control Problem (FHOCP) and ii) a state feedback law which is tuned off-line and guarantees that the real trajectories remain inside a hyper-tube centered along the nominal
trajectories.  The closed-loop system has analytically guaranteed stability and convergence properties. Future research efforts will be devoted towards extending the proposed methodology for multiple Autonomous Underwater Vehicles operating in a dynamic environment including not only static but also moving obstacles.




\bibliographystyle{ieeetr}
\bibliography{ifacconf,mybibfileshahab,mybibfile}
\end{document}